\documentclass{article}

 \usepackage[main, final]{neurips_2026}

\usepackage[utf8]{inputenc} 
\usepackage[T1]{fontenc}    
\usepackage{hyperref}       
\usepackage{url}            
\usepackage{booktabs}       
\usepackage{amsfonts}       
\usepackage{nicefrac}       
\usepackage{microtype}      
\usepackage{xcolor}         
\usepackage{amsmath}        
\usepackage{enumitem}       
\usepackage{graphicx}       
\usepackage{multirow}       
\usepackage{adjustbox}      
\usepackage{arydshln}       
\usepackage{algorithm}      
\usepackage{algorithmic}    
\usepackage{amsmath,amssymb,amsthm}  

\newtheorem{assumption}{Assumption}
\newtheorem{proposition}{Proposition}
\newtheorem{theorem}{Theorem}
\definecolor{my_magenta}{rgb}{0.906, 0.208, 0.525}
\definecolor{my_cyan}{rgb}{0.102, 0.698, 0.612}

\title{CLeaR: A Unified Framework for Resolving the Leakage–Degradation Dilemma in Style Transfer}

\author{
  Teng Zhou\\
  Zhejiang University \\
  \texttt{tengzhou@zju.edu.cn} \\
  \And
  Yunhao Chen\thanks{Equal Contribution, Corresponding Author}\\
  Fudan University \\
  \texttt{yhchen24@m.fudan.edu.cn} \\
}

\begin{document}

\maketitle

\vspace{-1.5em}

\begin{abstract}
Style transfer aims to render target content in the style of a reference image, but existing methods often suffer from content leakage, where objects, layouts, or semantics from the style reference appear in the generated output. Although prior data-driven and training-free methods can reduce leakage, they often face a leakage-degradation dilemma: stronger content suppression may weaken style fidelity, while richer style preservation may reintroduce unwanted reference content. We identify this dilemma across the full style-transfer pipeline, including feature separation, feature-space grounding, and diffusion generation. To address these issues, we propose CLeaR, a training-free framework for content-leakage-resistant style transfer. CLeaR first uses Orthogonal Subspace Projection to define content-reduced style targets in each vision foundation model (VFM) feature space. It then performs Ensemble Inversion, which optimizes a shared pixel-space style anchor satisfying style constraints across multiple VFMs. Finally, Energy-Guided Calibration maintains style alignment during diffusion sampling by steering the denoising trajectory toward the ensemble-defined style manifold. We further provide a theoretical analysis showing that the style-anchor estimation error decreases with the number of VFMs. Experiments on StyleBench demonstrate that CLeaR improves style alignment, reduces content leakage, and achieves better LLM-as-Judge evaluation compared with existing methods. The code is available at \href{https://github.com/0606zt/CLeaR}{https://github.com/0606zt/CLeaR}.
\end{abstract}


\section{Introduction}
\label{sec:introduction}

Given a style reference image and a target content condition, style transfer (ST) aims to render the target content in the reference style. Since early neural style transfer, this task has been closely tied to content-style (C-S) separation \cite{gatys2015neural}. With diffusion models becoming the dominant backbone for image generation \cite{rombach2022high,podell2024sdxl}, ST is increasingly formulated as conditional generation \cite{wang2023stylediffusion,zhang2023inversion,chung2024style,hertz2024style,wang2024instantstyle}. However, incomplete C-S disentanglement can cause content leakage, where objects, layouts, or semantics from the style reference appear in the output \cite{zhu2025less,rout2025rb,wang2024instantstyle}, reducing quality and user control.

To mitigate content leakage, existing methods commonly use visual representations from pretrained or learned encoders to define a style condition that is later injected into a diffusion generator. Data-driven methods train style-aware encoders, disentanglement modules, adapters, or projection layers on paired or curated stylization data \cite{gao2025styleshot,wang2025omnistyle2,zhang2024artbank,sohn2023styledrop,liu2023stylecrafter,ma2025scflow,qi2024deadiff,xing2025csgo}. Although effective in specific settings, they require additional training and are often constrained by the training distribution. Training-free methods instead manipulate pretrained VFM or diffusion features directly, such as subtracting content-related text features from image features \cite{wang2024instantstyle}, masking feature dimensions, shuffling spatial tokens, modifying attention keys and values, or guiding the denoising trajectory with reference-based objectives \cite{zhu2025less,wang2025styleadapter,roy2025duolora,chung2024style,hertz2024style,rout2025rb}. Despite these differences, both paradigms typically define style within a particular feature space and rely on the diffusion model to generate the final image from the resulting edited or guided condition.

Although these methods can reduce content leakage in some cases, they often lack stable control over the boundary between content removal and style preservation. As a result, existing methods tend to fail on different sides of the same dilemma \cite{rout2025rb}: methods that preserve rich style cues may also retain unwanted reference content, whereas methods that suppress reference content more strongly may remove important stylistic details. We refer to this as the leakage-degradation dilemma.

This dilemma is not caused by a single isolated step. It arises throughout the style-transfer pipeline, from separating C-S features, to grounding the style signal in a representation space, to injecting that signal during diffusion generation.

The first source is the \emph{feature separation stage}, where content-related and style-related components are separated within a reference representation. Existing operations such as subtraction, masking, patch manipulation, or attention editing can reduce content signals, but are mostly heuristic and may either leave residual content or discard style information correlated with content. To address this issue, we introduce Orthogonal Subspace Projection (OSP), which defines the style target geometrically in each VFM feature space. Given a content proxy, it removes the component of the reference feature aligned with the content direction and retains the orthogonal residual as the style feature.

The second source is the \emph{feature-space grounding stage}, where the extracted style signal is grounded in a particular representation space. Most existing methods rely on a single VFM feature space or a single learned representation, which is fragile because different VFMs encode style and content differently \cite{radford2021learning,simeoni2025dinov3,kar2024brave,zhang2023tale}. Directly aligning multiple VFM spaces is impractical because their dimensions, metrics, and semantic structures differ, while learning cross-space projectors would add training cost and risk domain overfitting. Instead, we make use of a simpler and more general observation: image pixel space provides a shared domain associated with all VFM feature spaces. This allows heterogeneous style constraints to be combined by optimizing an image itself, rather than by aligning feature spaces directly. Based on this idea, we introduce Ensemble Inversion (EI), which optimizes a learnable image tensor into a shared style anchor whose features satisfy the model-specific style constraints across all VFM branches.

The third source is the \emph{diffusion generation stage}, where the extracted style condition is injected into the generative process. Standard condition injectors such as IP-Adapter \cite{ye2023ip} are trained on ordinary image-text pairs rather than on content-reduced style anchors, so the denoising trajectory may drift from the intended style or reintroduce leaked content. To address this issue, we introduce Energy-Guided Calibration (EGC), a test-time guidance mechanism inspired by prior diffusion guidance methods \cite{yu2023freedom,bansal2023universal}. At selected denoising steps, it computes an ensemble feature-alignment energy between the current decoded image and the target style representation, and uses its gradient to calibrate the latent update toward the VFM-defined style manifold.

In summary, we introduce CLeaR (\underline{C}ontent \underline{Lea}kage \underline{R}esistant), a training-free framework for C-S disentangled style transfer. CLeaR defines content-reduced style targets through Orthogonal Subspace Projection, reconciles heterogeneous VFM targets through Ensemble Inversion, and maintains style alignment through Energy-Guided Calibration. Together, these components reduce content leakage at both the representation-extraction stage and the generation stage.

Our contributions are summarized as follows:
\begin{itemize}
    \item We propose CLeaR, a training-free framework that extracts a shared pixel-space style anchor via Orthogonal Subspace Projection and Ensemble Inversion, and maintains style alignment through Energy-Guided Calibration.
    \item We provide a theoretical analysis showing that the style-anchor estimation error decreases with the number of VFMs under limited cross-model dependence.
    \item Experiments on StyleBench \cite{gao2025styleshot} show improvements in style alignment, content leakage metrics, and LLM-as-Judge evaluation.
\end{itemize}


\section{Related Work}
\label{sec:related_work}

\paragraph{Style Transfer} Gatys et al. \cite{gatys2015neural} formalized style transfer by optimizing deep feature statistics. With diffusion models \cite{rombach2022high,podell2024sdxl} and conditioning adapters \cite{ye2023ip}, style transfer has increasingly become a conditional generation task, where the central challenge is still C-S separation \cite{wang2023stylediffusion}.

Existing methods can be broadly grouped into data-driven and training-free approaches. Data-driven methods learn style representations from paired or curated stylization data \cite{ma2025scflow,gao2025styleshot,wang2025omnistyle2,qi2024deadiff,zhang2024artbank,sohn2023styledrop,liu2023stylecrafter}, or train projection modules for C-S composition \cite{xing2025csgo}. These methods can be effective, but require additional training and are often constrained by the training distribution. Training-free methods instead manipulate pretrained VFM or diffusion features directly, including subtracting textual concepts from image embeddings \cite{wang2024instantstyle}, masking feature dimensions, shuffling spatial patches \cite{zhu2025less,wang2025styleadapter,ngweta2023simple}, modifying attention keys/values \cite{roy2025duolora,chung2024style,hertz2024style,he2026freestyle}, or adjusting the reverse diffusion trajectory through inversion or test-time optimization \cite{zhang2023inversion,hertz2024style,rout2025rb,zhou2025attention}. Though effective, these methods often face a leakage-degradation dilemma across the style-transfer pipeline: feature operations may leave residual content or remove useful style cues, single-space representations are fragile across VFMs, and style conditions drift during diffusion sampling. Based on this analysis, CLeaR combines Orthogonal Subspace Projection, Ensemble Inversion, and Energy-Guided Calibration to form a unified solution for content-leakage-resistant style transfer.

\paragraph{Model Inversion} Model inversion (MI) techniques originally emerged to analyze privacy risks by reconstructing training data from machine learning models \cite{fredrikson2014privacy, fredrikson2015model}. Early MI approaches primarily target classification networks, using gradient-based optimization and generative priors to reconstruct the original inputs from deep representations \cite{zhang2020secret, kahla2022label, struppek2022plug}. While traditional methods are generally tailored for classifiers, recent advancements \cite{chen2025leakyclip} have extended inversion paradigms to extract high-dimensional visual information from complex, large-scale VFMs successfully.

These inversion techniques provide a natural bridge between feature spaces and the pixel domain. And in our C-S separation process, the image pixel space serves as a universal grounding domain that corresponds to all VFM feature spaces. Leveraging this insight, we employ model inversion to combine heterogeneous representations.


\section{Methodology}

\begin{figure}[!htbp]
    \centering
    \includegraphics[width=1\linewidth]{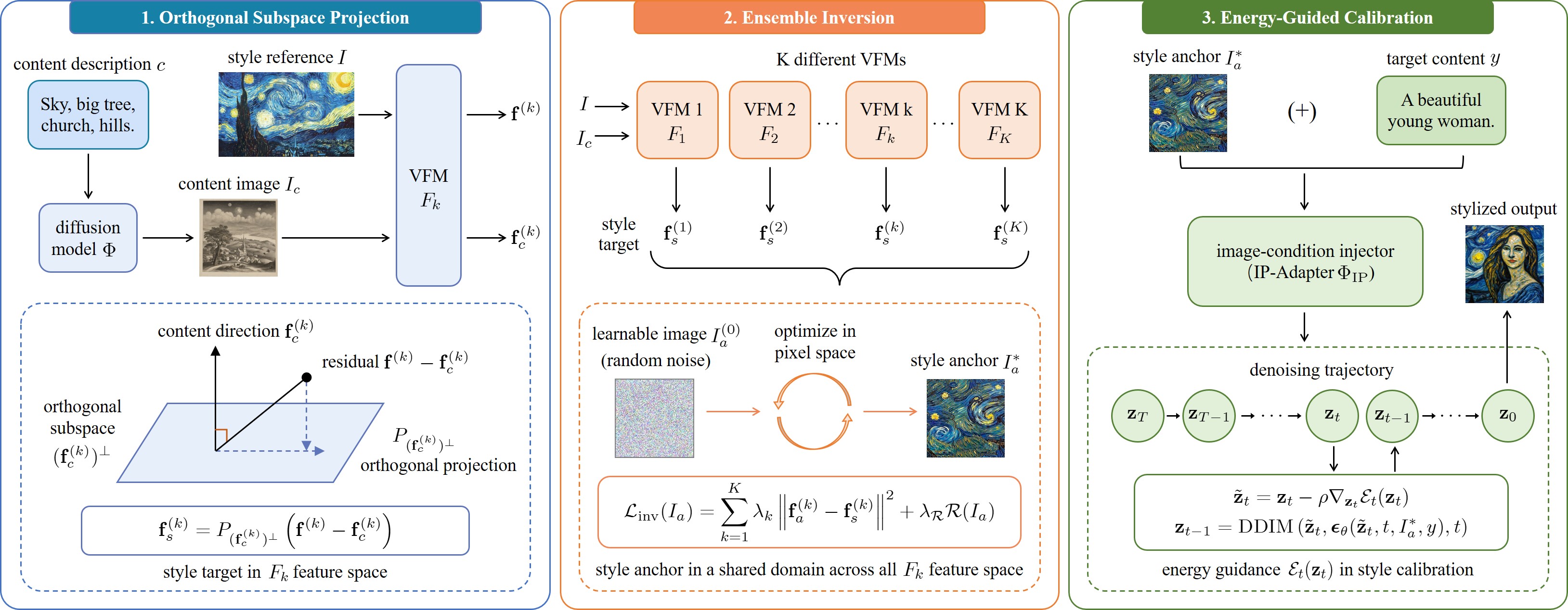}
    \vspace{-1.5em}
    \caption{Overview of the CLeaR framework. The three components target the three failure sources in the entire style transfer process: Orthogonal Subspace Projection (feature separation), Ensemble Inversion (feature-space grounding), and Energy-Guided Calibration (diffusion generation).}
    \label{fig:method}
\end{figure}

CLeaR contains three components that correspond to the three failure sources discussed in Sec.~\ref{sec:introduction}. Orthogonal Subspace Projection defines a content-reduced style target in each VFM feature space. Ensemble Inversion combines these model-specific targets by optimizing a shared pixel-space style anchor. Energy-Guided Calibration then keeps the generated image aligned with the target style during diffusion sampling. The overall pipeline is summarized in Fig.~\ref{fig:method} and Appendix~\ref{appendix:overall_pipe}.

\subsection{Orthogonal Subspace Projection}
\label{sec:ortho_proj}

Given a style reference image \(I\) and its content description \(c\), our goal is to remove content-related information from the reference feature while retaining style information. Let \(F:\mathcal{I}\to\mathcal{H}\) be a pretrained VFM, and denote the reference feature as \(\mathbf{f}=F(I)\). To obtain a modality-matched content feature, we generate a content proxy image \(I_c=\Phi(c)\) using a text-to-image diffusion model \(\Phi\), and compute \(\mathbf{f}_c=F(I_c)\) \cite{chen2025leakyclip,gordon2024mismatch,kim2023misalign}.

We define the raw residual as
\begin{equation}
    \mathbf{d}=\mathbf{f}-\mathbf{f}_c.
\end{equation}
Since \(\mathbf{d}\) may still contain content-aligned information, we remove the component of \(\mathbf{d}\) along the content direction \(\mathbf{f}_c\). We seek a style feature \(\mathbf{f}_s=\mathbf{d}+\boldsymbol{\delta}\) that is orthogonal to \(\mathbf{f}_c\), while keeping the adjustment \(\boldsymbol{\delta}\) as small as possible:
\begin{equation}
    \min_{\boldsymbol{\delta}} \|\boldsymbol{\delta}\|^2
    \quad
    \text{s.t.}
    \quad
    \langle \mathbf{d}+\boldsymbol{\delta},\mathbf{f}_c\rangle=0.
\end{equation}
Solving them gives
\begin{equation}
\label{eq:ortho_proj_solution}
    \mathbf{f}_s
    =
    \mathbf{d}
    -
    \frac{\langle \mathbf{d},\mathbf{f}_c\rangle}{\|\mathbf{f}_c\|^2}\mathbf{f}_c
    =
    P_{\mathbf{f}_c^\perp}(\mathbf{d}),
\end{equation}
where \(P_{\mathbf{f}_c^\perp}\) denotes projection onto the orthogonal complement of \(\mathbf{f}_c\). Thus, \(\langle \mathbf{f}_s,\mathbf{f}_c\rangle=0\), and \(\mathbf{f}_s\) serves as a content-reduced style target in the VFM feature space.

\subsection{Ensemble Inversion}
\label{sec:ensb_inv}

The projection in Sec.~\ref{sec:ortho_proj} defines a style target within one VFM feature space. However, different VFMs capture different aspects of style, and their feature spaces cannot be directly aligned because they may have different dimensions and geometries. Instead, we make use of a simpler observation: image pixel space provides a shared domain associated with all VFM feature spaces.

Let \(\{F_k\}_{k=1}^{K}\) be a set of pretrained VFMs. For each model, we compute
\begin{equation}
    \mathbf{f}^{(k)}=F_k(I),
    \qquad
    \mathbf{f}_c^{(k)}=F_k(I_c),
\end{equation}
and obtain the model-specific style target
\begin{equation}
\label{eq:ensb_inv_k_solutions}
    \mathbf{f}_s^{(k)}
    =
    P_{(\mathbf{f}_c^{(k)})^\perp}
    \left(
    \mathbf{f}^{(k)}-\mathbf{f}_c^{(k)}
    \right).
\end{equation}

We then optimize a single image \(I_a\) whose features match these style targets across all VFM branches. Specifically, let
\begin{equation}
    \mathbf{f}_a^{(k)}=F_k(I_a).
\end{equation}
The ensemble inversion objective is
\begin{equation}
\label{eq:ensb_inv_objective}
    \mathcal{L}_{\mathrm{inv}}(I_a)
    =
    \sum_{k=1}^{K}
    \lambda_k
    \left\|
    \mathbf{f}_a^{(k)}-\mathbf{f}_s^{(k)}
    \right\|^2
    +
    \lambda_{\mathcal{R}}\mathcal{R}(I_a),
\end{equation}
where \(\lambda_k\) controls the contribution of each VFM and \(\mathcal{R}(\cdot)\) is a regularizer for spatial smoothness.

Starting from a randomly initialized image tensor \(I_a^{(0)}\), we update the anchor by gradient descent:
\begin{equation}
\label{eq:ensb_inv_grad}
    I_a^{(t+1)}
    =
    I_a^{(t)}
    -
    \eta
    \nabla_{I_a}
    \mathcal{L}_{\mathrm{inv}}
    \left(I_a^{(t)}\right),
\end{equation}
where \(\eta\) is the learning rate. After optimization, the resulting image \(I_a^*\) is used as the style anchor. This anchor provides a shared pixel-space representation whose VFM features match the content-reduced style targets across the ensemble.

\subsection{Energy-Guided Calibration}
\label{sec:style_calib}

The style anchor \(I_a^*\) is used as the image condition for diffusion generation. However, standard condition injectors such as IP-Adapter are trained on ordinary image-text pairs and may drift from the intended style during denoising. To reduce this drift, we introduce a test-time Energy-Guided Calibration step, following the idea of gradient-based diffusion guidance \cite{yu2023freedom,bansal2023universal}.

At denoising step \(t\), we first estimate the clean latent from the current noisy latent:
\begin{equation}
\label{eq:pred_clean_latent}
    \hat{\mathbf z}_0(\mathbf z_t)
    =
    \frac{
    \mathbf z_t-\sigma_t
    \boldsymbol{\epsilon}_\theta(\mathbf z_t,t,I_a^*,y)
    }{\alpha_t},
\end{equation}
where \(\alpha_t\) and \(\sigma_t\) are determined by the noise schedule. 
Using the same VFM ensemble as in Ensemble Inversion, we define the style-alignment energy as
\begin{equation}
\label{eq:style_calib_energy}
    \mathcal{E}_t(\mathbf z_t)
    =
    \sum_{k=1}^{K}
    \lambda_k
    \left\|
    F_k(D(\hat{\mathbf z}_0(\mathbf z_t)),)
    -
    \mathbf f_s^{(k)}
    \right\|^2.
\end{equation}
where \(D\) is the VAE decoder. A lower value of \(\mathcal{E}_t\) indicates stronger alignment with the target style representation.

We then use the energy gradient to calibrate the current latent:
\begin{equation}
\label{eq:style_calib_latent}
    \tilde{\mathbf z}_t
    =
    \mathbf z_t
    -
    \rho
    \nabla_{\mathbf z_t}
    \mathcal{E}_t(\mathbf z_t),
\end{equation}
where \(\rho\) controls the calibration strength. The calibrated latent is passed to DDIM \cite{song2021denoising} update:
\begin{equation}
\label{eq:style_calib_ddim}
    \mathbf z_{t-1}
    =
    \mathrm{DDIM}
    \left(
    \tilde{\mathbf z}_t,
    \boldsymbol{\epsilon}_\theta(\tilde{\mathbf z}_t,t,I_a^*,y),
    t
    \right).
\end{equation}
This calibration requires no retraining and improves style alignment by steering the sampling trajectory toward the ensemble-defined style target.

\subsection{Theoretical Analysis}

We analyze how the accuracy of ensemble inversion depends on the number of feature extractors.
Let $x^\star \in \mathbb{R}^d$ denote the ideal content-free style anchor, and let
$F_1,\dots,F_K$ be $K$ feature extractors.
For each model $k$, let the target style feature be $s_k$.
We consider the estimator
\begin{equation}
\hat x_K
=
\arg\min_x
\left\{
\frac{1}{K}\sum_{k=1}^K \|F_k(x)-s_k\|^2
+
\lambda \|x-x^\star\|^2
\right\},
\label{eq:theory_obj}
\end{equation}
where $\lambda>0$ is a regularization coefficient.

Let
\begin{equation}
J_k := \nabla F_k(x^\star),
\qquad
H_K := \frac{1}{K}\sum_{k=1}^K J_k^\top J_k.
\label{eq:theory_HK}
\end{equation}
We write the model-specific target error as
\begin{equation}
s_k = F_k(x^\star)+\varepsilon_k,
\qquad
\mu_k := \mathbb{E}[\varepsilon_k],
\qquad
\tilde\varepsilon_k := \varepsilon_k-\mu_k.
\label{eq:theory_noise_decomp}
\end{equation}

\begin{assumption}[Diversity and local identifiability]
\label{ass:main_div_ident}
The centered residuals have bounded second moments and limited average cross-model dependence:
\begin{equation}
\|\mathbb{E}[\tilde\varepsilon_k\tilde\varepsilon_k^\top]\|_{\mathrm{op}} \le \sigma^2,
\qquad
\|\mathbb{E}[\tilde\varepsilon_k\tilde\varepsilon_j^\top]\|_{\mathrm{op}}
\le
\rho_K \sigma^2
\quad (k\neq j),
\label{eq:theory_dep}
\end{equation}
where $\rho_K \in [0,1]$ is an ensemble dependence coefficient.
In addition, the ensemble is locally identifiable:
\begin{equation}
H_K \succeq m_K I
\qquad \text{for some } m_K>0.
\label{eq:theory_ident}
\end{equation}
\end{assumption}

Assumption~\ref{ass:main_div_ident} is motivated by two standard ideas.
First, heterogeneous vision encoders can provide complementary feature views rather than identical errors, making cross-model dependence, rather than exact equality, the key quantity to control.
Second, local recovery in inverse problems is typically tied to a nondegenerate Jacobian or sensitivity matrix.
These perspectives are consistent with recent evidence on complementary encoder biases and ensemble diversity, as well as standard identifiability analyses in inverse problems \cite{kar2024brave,wood2023unified,rouchier2018solving,cintron2020sensitivity}.

\begin{theorem}[Dependence-controlled scaling with ensemble size] Under Assumption~\ref{ass:main_div_ident} and the usual local regularity conditions deferred to Appendix~\ref{appendix:proof_multi_model}, the expected inversion error satisfies
\label{thm:dependence_scaling}
\begin{equation}
\mathbb{E}\|\hat x_K-x^\star\|^2
\lesssim
\frac{d\,\bar M_K\,\sigma^2}{(m_K+\lambda)^2}
\left(
\rho_K+\frac{1-\rho_K}{K}
\right),
\label{eq:theory_main_bound}
\end{equation}
where
\begin{equation}
\bar M_K := \frac{1}{K}\sum_{k=1}^K \|J_k\|_{\mathrm{op}}^2.
\label{eq:theory_Mbar}
\end{equation}
\end{theorem}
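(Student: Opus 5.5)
The plan is to linearize the estimator \eqref{eq:theory_obj} around $x^\star$, solve the resulting regularized least-squares problem in closed form, and then bound the second moment of the aggregated noise using the dependence structure of Assumption~\ref{ass:main_div_ident}. Everything nonlinear is pushed into the ``usual local regularity conditions''. Concretely, I will assume:
\begin{itemize}
\item each $F_k$ is $C^2$ near $x^\star$ with uniformly bounded Hessians;
\item the target errors are small enough that $\hat x_K$ lies in that neighborhood;
\item the biases $\mu_k$ vanish, or their Jacobian-weighted average $\frac1K\sum_k J_k^\top\mu_k$ is of lower order, so it can be absorbed into $\lesssim$.
\end{itemize}
These conditions justify replacing $F_k(x)$ by $F_k(x^\star)+J_k(x-x^\star)$ up to a second-order remainder. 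The bound stated in the theorem is the leading term.

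\textbf{Step 1 (linearized solution).} Write $e=x-x^\star$. Under linearization the objective becomes $\frac1K\sum_k\|J_k e-\varepsilon_k\|^2+\lambda\|e\|^2$. Its first-order condition gives
\begin{equation*}
\hat e_K=(H_K+\lambda I)^{-1} g_K,\qquad g_K:=\frac1K\sum_{k=1}^K J_k^\top\varepsilon_k .
\end{equation*}
By \eqref{eq:theory_ident}, $H_K+\lambda I\succeq(m_K+\lambda)I$, so $\|(H_K+\lambda I)^{-1}\|_{\mathrm{op}}\le (m_K+\lambda)^{-1}$. It follows that
\begin{equation*}
\mathbb{E}\|\hat e_K\|^2\le (m_K+\lambda)^{-2}\,\mathbb{E}\|g_K\|^2 .
\end{equation*}
I would apply the same estimate to the exact first-order condition $\frac1K\sum_k\nabla F_k(\hat x)^\top(F_k(\hat x)-s_k)+\lambda \hat e=0$. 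The difference from the linear version is $O(\|\hat e\|^2+\|\hat e\|\,\|\varepsilon\|)$, which the smoothness conditions make higher order.

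\textbf{Step 2 (variance of the aggregated noise).} With $\mu_k=0$, expand
\begin{equation*}
\mathbb{E}\|g_K\|^2=\frac1{K^2}\sum_{k,j}\operatorname{tr}\!\big(J_k^\top\mathbb{E}[\tilde\varepsilon_k\tilde\varepsilon_j^\top]J_j\big).
\end{equation*}
Each summand is a $d\times d$ trace, so by $|\operatorname{tr}A|\le d\|A\|_{\mathrm{op}}$ it is bounded by $d\,\|J_k\|_{\mathrm{op}}\|J_j\|_{\mathrm{op}}\,\|\mathbb{E}[\tilde\varepsilon_k\tilde\varepsilon_j^\top]\|_{\mathrm{op}}$. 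Applying \eqref{eq:theory_dep}:
\begin{itemize}
\item the diagonal terms contribute at most $d\sigma^2\sum_k\|J_k\|_{\mathrm{op}}^2$;
\item the off-diagonal terms contribute at most $d\rho_K\sigma^2\big[(\sum_k\|J_k\|_{\mathrm{op}})^2-\sum_k\|J_k\|_{\mathrm{op}}^2\big]$.
\end{itemize}
Cauchy--Schwarz gives $(\sum_k\|J_k\|_{\mathrm{op}})^2\le K\sum_k\|J_k\|_{\mathrm{op}}^2=K^2\bar M_K$. Hence
\begin{equation*}
\mathbb{E}\|g_K\|^2\le d\sigma^2\bar M_K\Big[\tfrac1K+\rho_K\big(1-\tfrac1K\big)\Big]=d\sigma^2\bar M_K\Big(\rho_K+\tfrac{1-\rho_K}{K}\Big).
\end{equation*}
Combined with Step 1, this is exactly \eqref{eq:theory_main_bound}.

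\textbf{Main obstacle.} The delicate part is Step 1's passage from the linearized to the true minimizer, together with the treatment of $\mu_k$. The remainder must be shown to be dominated by the variance term, which requires $\varepsilon_k$ small relative to the curvature scale and $m_K+\lambda$ bounded below. A nonzero mean bias would otherwise add a non-vanishing term $\|(H_K+\lambda I)^{-1}\frac1K\sum_kJ_k^\top\mu_k\|^2$. My first attempt would be to state the regularity conditions so that they explicitly include $\mu_k=0$, or an averaged-bias bound of the same order, together with a local Lipschitz-Jacobian condition. I would then close the argument with a standard implicit-function/contraction argument on the first-order condition in a ball of radius proportional to $\|g_K\|/(m_K+\lambda)$.
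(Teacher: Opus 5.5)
Your proposal is correct and follows essentially the same route as the paper: linearize around $x^\star$, solve the normal equations $(H_K+\lambda I)\tilde\Delta_K=\frac1K\sum_k J_k^\top\varepsilon_k$, use $\|(H_K+\lambda I)^{-1}\|_{\mathrm{op}}\le (m_K+\lambda)^{-1}$, remove the bias via $\frac1K\sum_k J_k^\top\mu_k=0$ (which the paper imposes exactly as one of its appendix regularity assumptions), and bound the diagonal and off-diagonal trace terms with the factor $d$. The differences are cosmetic. You use a global Cauchy--Schwarz bound on $(\sum_k\|J_k\|_{\mathrm{op}})^2$ where the paper uses pairwise AM--GM, $\sqrt{M_kM_j}\le\frac12(M_k+M_j)$. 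Your sketch of passing to the exact minimizer through the exact first-order condition is also more explicit than the paper, which simply asserts that the Taylor remainder is lower order.
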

In particular, the gain from enlarging the ensemble is governed by the cross-model dependence coefficient $\rho_K$.
When the centered residuals are weakly correlated across models, that is, when $\rho_K$ is small, the dominant term decreases approximately as $1/K$. 

\paragraph{Empirical connection} The bound in Eq.~\ref{eq:theory_main_bound} predicts that the benefit of increasing \(K\) depends on the complementarity of the VFM residuals. This prediction is consistent with the VFM composition ablation in Fig.~\ref{fig:vfm_abl}: individual VFMs emphasize different aspects of C-S disentanglement, while increasing the number of VFMs improves both SA and CL. This empirical trend supports the interpretation that Ensemble Inversion benefits from complementary VFM constraints.


\section{Experiments}

\subsection{Settings}
\label{sec:setting}

\paragraph{Evaluation Dataset} We evaluate our method on StyleBench \cite{gao2025styleshot}, a recent dataset containing 40 content images and 490 style images across 73 distinct styles. For each style, we select one representative image (\(I\)) and generate its content description (\(c\)) using Qwen3 \cite{yang2025qwen3}. For content, we use the textual prompts (\(y\)) corresponding to the 40 content images as generation conditions. This yields 2,920 style-content pairs for our experiments.

\paragraph{Implementation Details} For each style-content pair, we generate 10 images over 25 random seeds. We use five VFMs in CLeaR: CLIP \cite{radford2021learning,schlarmann2024robust}, CSD-CLIP \cite{somepalli2024measuring}, DINO \cite{simeoni2025dinov3}, VGG \cite{simonyan2014very}, and Inception \cite{szegedy2016rethinking}. In Ensemble Inversion, all VFMs are equally weighted with \(\lambda_k=0.2\), and the regularizer \(\mathcal{R}(\cdot)\) is total variation (TV) \cite{rudin1992nonlinear} with \(\lambda_{\mathcal{R}}=0.05\). We optimize for 300 iterations with learning rate \(\eta=0.01\). For Energy-Guided Calibration, guidance is applied only in the refinement stage \cite{yu2023freedom}, i.e., \(t:T/10\to0\), for 5 iterations with correction strength \(\rho=0.01\). Further details are in Sec.~\ref{sec:ablation}.

\paragraph{Evaluation Metrics} We evaluate seven metrics across four aspects:

\begin{itemize}[leftmargin=*,noitemsep,topsep=0pt]
    \item \underline{\textbf{Style Alignment}}: We measure style alignment using CSD-CLIP cosine similarity and VGG Gram-matrix style loss \cite{gatys2015neural} between the generated images and the style references.

    \item \underline{\textbf{Content Alignment}}: We measure content alignment using CLIP cosine similarity between the generated images and the target content prompts.

    \item \underline{\textbf{Content Leakage}}: We measure content leakage using DINO similarity and KID \cite{binkowski2018demystifying} between the generated images and the content images. We also use Qwen3 to rate leaked reference-content elements on a 1--5 scale.

    \item \underline{\textbf{Aesthetic Quality}}: We measure aesthetic quality using the CLIP-Aesthetic score \cite{schuhmann2022laion}.
\end{itemize}

\paragraph{Baselines} We compare CLeaR with eight recent methods: Attention Distillation \cite{zhou2025attention}, CSGO \cite{xing2025csgo}, DEADiff \cite{qi2024deadiff}, InstantStyle \cite{wang2024instantstyle}, MaskST \cite{zhu2025less}, RB-Modulation \cite{rout2025rb}, StyleAligned \cite{hertz2024style}, and StyleShot \cite{gao2025styleshot}. Since style transfer methods vary in input modalities, and our framework uses a style image and a content text prompt, we restrict our comparison to baselines that adopt the same modality combination to ensure a fair evaluation.

\subsection{Comparisons}
\label{sec:comparison}

\paragraph{Visualization of Style Anchors}

\begin{figure}[!htbp]
    \centering
    \includegraphics[width=\linewidth]{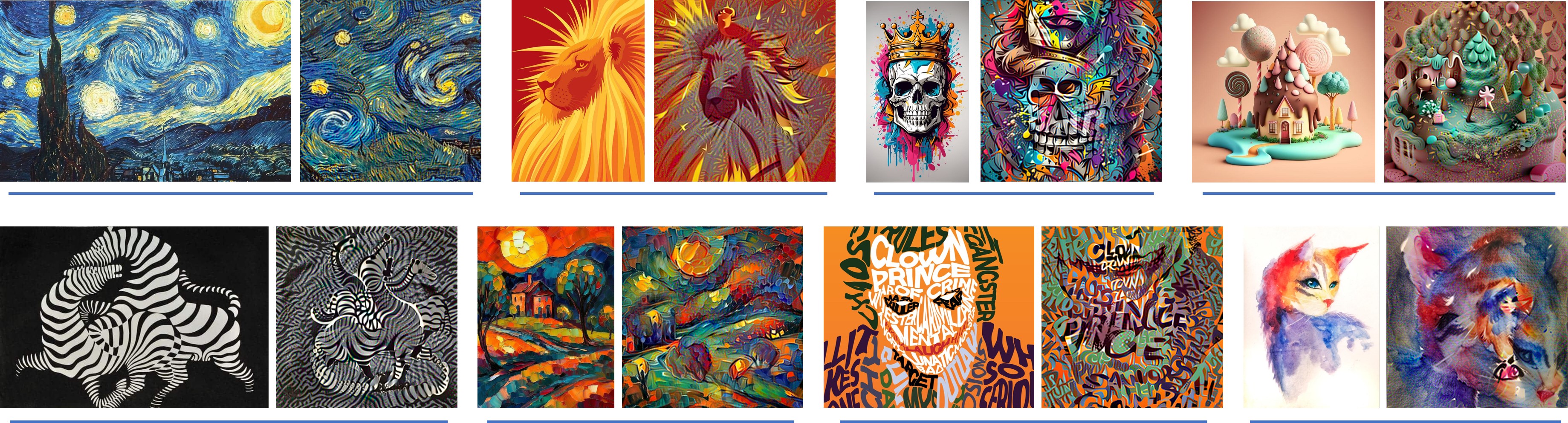}
    \vspace{-1.5em}
    \caption{Visualization of style anchor images extracted by Ensemble Inversion. For each example, the left image is the reference and the right image is the style anchor. The anchor successfully achieves C-S disentanglement, retaining stylistic elements (e.g., color, texture, brushstroke) while suppressing content semantics.}
    \label{fig:style_anchor}
\end{figure}

Fig.~\ref{fig:style_anchor} visualizes the key intermediate output of our method, the style anchor image extracted via Ensemble Inversion. For each example, the left image is the original style reference \(I\), and the right image is the corresponding style anchor \(I_a^*\). The style anchor largely removes the content objects of the reference while faithfully preserving style-related representations. For instance, in the first example (row 1, col 1), the anchor removes the large tree, hills, and church steeple, retaining only the starry brushstrokes. In the seventh example (row 2, col 3), the clown figure is removed, leaving only the typographic style. These indicate that our method effectively disentangles content elements from stylistic attributes. More examples are provided in Appendix~\ref{appendix:style_anchor}.

\begin{figure}[!htbp]
    \centering
    \includegraphics[width=\linewidth]{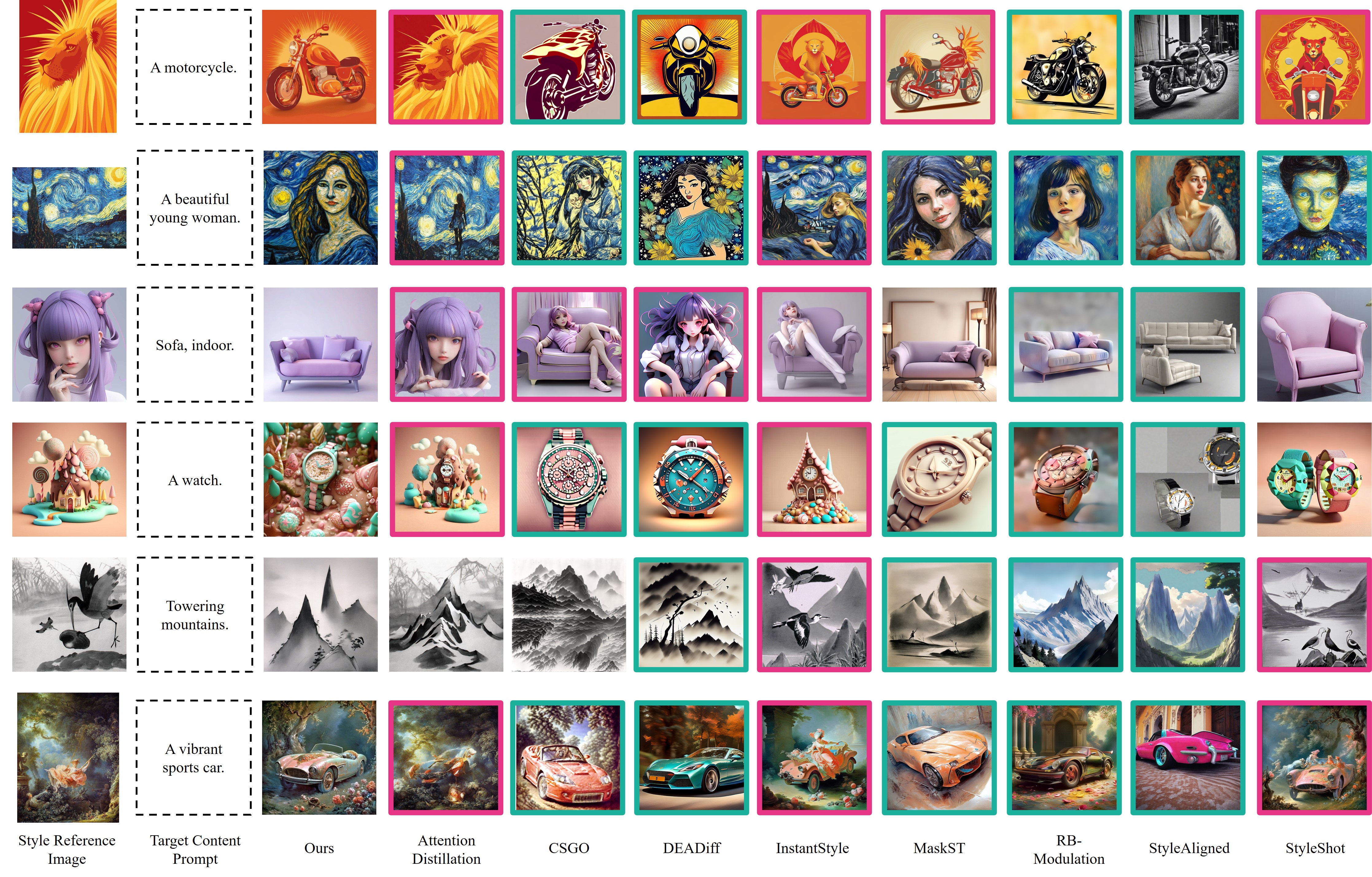}
    \vspace{-1.5em}
    \caption{Qualitative comparison results. Existing methods suffer from either \textcolor{my_magenta}{content leakage} or \textcolor{my_cyan}{style degradation}, while our method achieves robust stylization without either problem.}
    \label{fig:qual_comp}
\end{figure}

\paragraph{Qualitative Results} Fig.~\ref{fig:qual_comp} shows the qualitative comparisons, with magenta boxes marking content leakage and cyan boxes marking style degradation. Existing methods often fail on one side of the leakage-degradation dilemma: data-driven methods such as StyleShot, DEADiff, and CSGO show limited style alignment on out-of-distribution references; feature-manipulation methods such as InstantStyle and MaskST either retain residual content or suppress style details; and trajectory-level methods such as RB-Modulation and StyleAligned can depend on unavailable style-name priors. Attention Distillation preserves texture but often introduces reference content. In contrast, CLeaR achieves stronger style transfer with less visible content leakage. More results are in Appendix~\ref{appendix:qual_comp}.

\begin{table}[!htbp]
    \fontsize{12pt}{17pt}\selectfont
    \centering
    \caption{Quantitative comparison of CLeaR and other baselines, with the \textbf{best} and \underline{second-best} scores marked accordingly. CLeaR outperforms others across most aspects, especially in style alignment, content leakage, and content alignment.}
    \label{tab:quant_comp}
    \begin{adjustbox}{width=\linewidth}
	\begin{tabular}{cccccccccc}
        \hline
        \multirow{2}{*}{} & \multicolumn{2}{c}{Style \; Alignment} & Content Alignment & \multicolumn{3}{c}{Content Leakage} & Aesthetic Quality \\
        \cmidrule(lr){2-3} \cmidrule(lr){4-4} \cmidrule(lr){5-7} \cmidrule(lr){8-8}  
        & CSD\(\uparrow\) & Style Loss\(\downarrow\) & CLIP\(\uparrow\) & DINO\(\downarrow\) & KID\(\uparrow\) & AI Scoring\(\uparrow\) & CLIP-Aesthetic\(\uparrow\) \\
        \hline
        Attention Distillation & \underline{0.630} & 0.106 & 0.179 & 0.322 & 0.112 & 1.84 & 5.861 \\
        CSGO & 0.438 & 0.302 & 0.215 & \underline{0.168} & \underline{0.237} & 3.65 & 5.912 \\
        DEADiff & 0.369 & 0.259 & 0.225 & 0.209 & 0.233 & 3.64 & 6.349 \\
        InstantStyle & 0.492 & 0.119 & 0.221 & 0.263 & 0.092 & 2.75 & 6.507 \\
        MaskST & 0.416 & 0.139 & 0.225 & 0.197 & 0.103 & 3.45 & \underline{6.605} \\
        RB-Modulation & 0.443 & 0.177 & 0.234 & 0.213 & 0.111 & 3.82 & \textbf{6.677} \\
        StyleAligned & 0.206 & 0.252 & \textbf{0.241} & 0.177 & 0.120 & \underline{3.85} & 6.573 \\
        StyleShot & 0.505 & \underline{0.097} & 0.202 & 0.172 & 0.174 & 3.72 & 6.164 \\
        Ours & \textbf{0.682} & \textbf{0.034} & \underline{0.238} & \textbf{0.066} & \textbf{0.358} & \textbf{4.16} & 6.445 \\
        \hline
	\end{tabular}
    \end{adjustbox}
    \vspace{-1em}
\end{table}

\paragraph{Quantitative Results} Tab.~\ref{tab:quant_comp} reports the quantitative comparison. CLeaR achieves the best performance on most metrics, with clear gains in the two key aspects of C-S disentanglement: style alignment and content leakage suppression. Compared with the strongest baseline, CLeaR improves CSD by 8.3\%, Style Loss by 64.9\%, DINO by 60.7\%, KID by 51.1\%, and AI-Scoring by 8.05\%, while maintaining competitive content alignment with the second-best CLIP score. Its slightly lower aesthetic score may result from the inversion process, which prioritizes disentanglement and structural preservation over aesthetic smoothing. We further validate these conclusions in Appendix~\ref{appendix:heldout_quant_comp} using held-out VFMs and AI evaluators, which show consistent trends and confirm that the reported gains are not biased by the closed-loop evaluation setup.

\subsection{Ablation Studies}
\label{sec:ablation}

Our ablations examine three key factors in CLeaR: the content description of the style reference in OSP, the VFM composition in EI, and the guidance stage in EGC. For clarity, we report two aggregate metrics: style alignment (SA\(\uparrow\)) and content leakage suppression (CL\(\uparrow\)). We first normalize each metric to \([0,1]\) across all compared settings. SA is the average of normalized CSD and \(1-\text{normalized Style Loss}\), since higher CSD and lower Style Loss indicate better style alignment. CL is the average of \(1-\text{normalized DINO}\), normalized KID, and normalized AI score, since lower DINO similarity, higher KID, and higher AI score indicate less leakage.

\vspace{0.3em}

\begin{figure}[H]
    \centering
    \begin{minipage}{0.38\textwidth}
        \includegraphics[width=\linewidth]{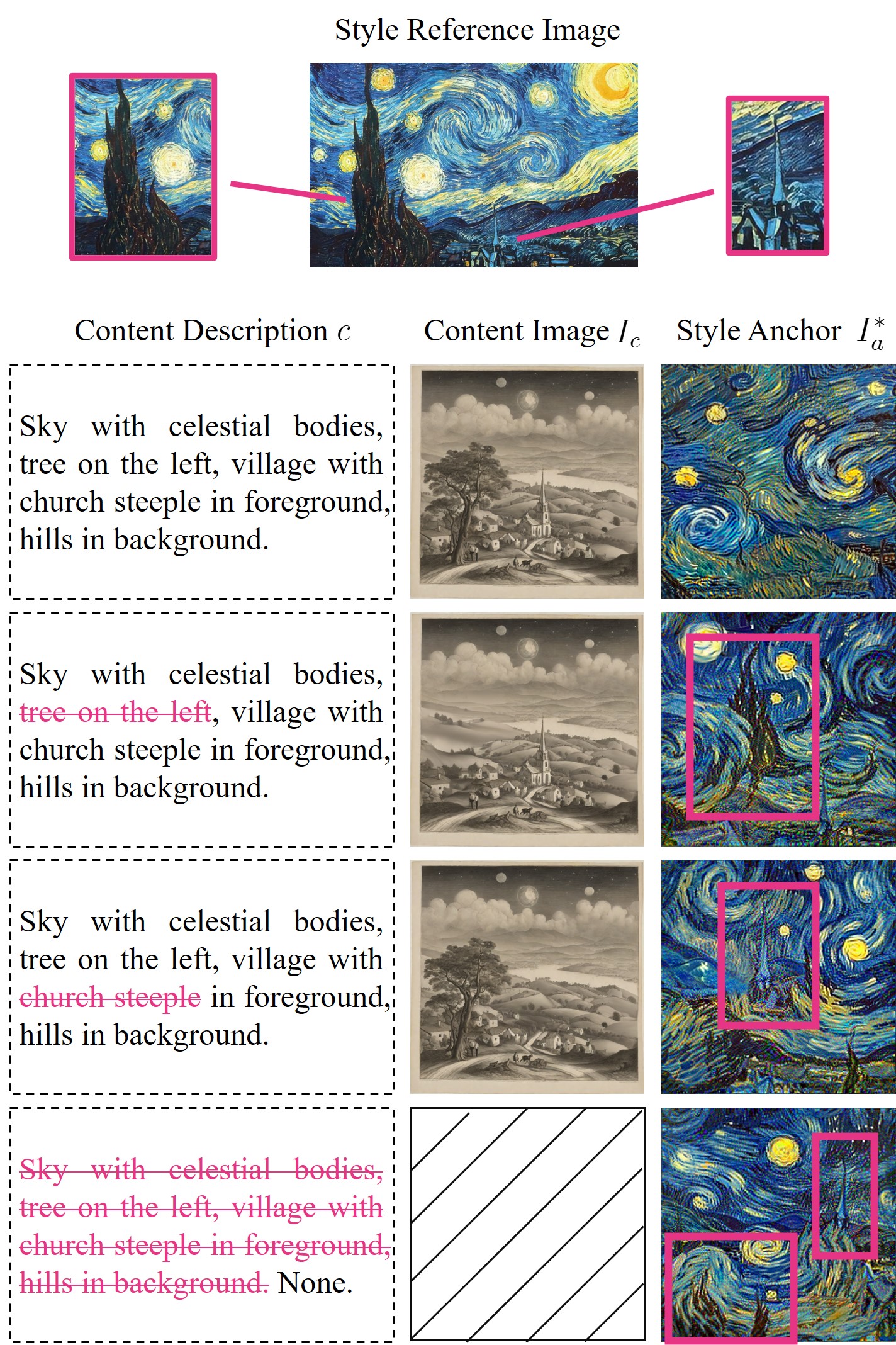}
        \vspace{-0.3em}
        \caption{Style anchor extraction with varying content descriptions \(c\). For a given reference, different \(c\) yield different content images \(I_c\) and style anchors \(I_a^*\). Removing an object from \(c\) preserves that object in \(I_a^*\), showing flexible disentanglement control.}
        \label{fig:style_content_abl}
    \end{minipage}
    \hfill
    \begin{minipage}{0.58\textwidth}
        \paragraph{Content Description of Style Reference} The orthogonal projection in Eq.~\ref{eq:ortho_proj_solution} requires a content description \(c\) of the style reference image \(I\) to generate a content image \(I_c = \Phi(c)\). This description determines which content elements are treated as “content” to be subtracted from the style feature. To examine the flexibility of our C‑S disentanglement, we manually vary \(c\) by removing specific objects (e.g., “tree”, “church steeple”) and observe the resulting style anchor \(I_a^*\) extracted via Ensemble Inversion.

        \vspace{0.5em}

        As shown in Fig.~\ref{fig:style_content_abl}, when \(c\) contains all content elements (row 1), \(I_c\) includes the full object set, and the extracted anchor retains only the starry style brushstrokes. Removing “tree” from \(c\) (row 2) causes the tree to reappear in \(I_a^*\), while the church steeple is suppressed; removing “church steeple” (row 3) yields the opposite pattern. When no content is explicitly described (row 4), the inversion lacks guidance and retains many content artifacts. These results demonstrate that our framework allows fine‑grained control over which content elements are disentangled, simply by editing the natural-language description \(c\), without retraining any component.

        \vspace{1.2em}

        \paragraph{VFM Selection} To assess the effect of VFM composition, we evaluate Ensemble Inversion with \(k=1,3,5\) models selected from the five VFMs used in our main experiments.

        \vspace{0.5em}

        As shown in Fig.~\ref{fig:vfm_abl}, CLIP, DINO, and Inception mainly help suppress content leakage, while VGG and CSD-CLIP better preserve fine-grained style patterns. Increasing the number of VFMs consistently improves both SA and CL, showing that complementary VFM features lead to stronger C-S disentanglement.
    \end{minipage}
\end{figure}

\begin{figure}[!htbp]
    \centering
    \includegraphics[width=\linewidth]{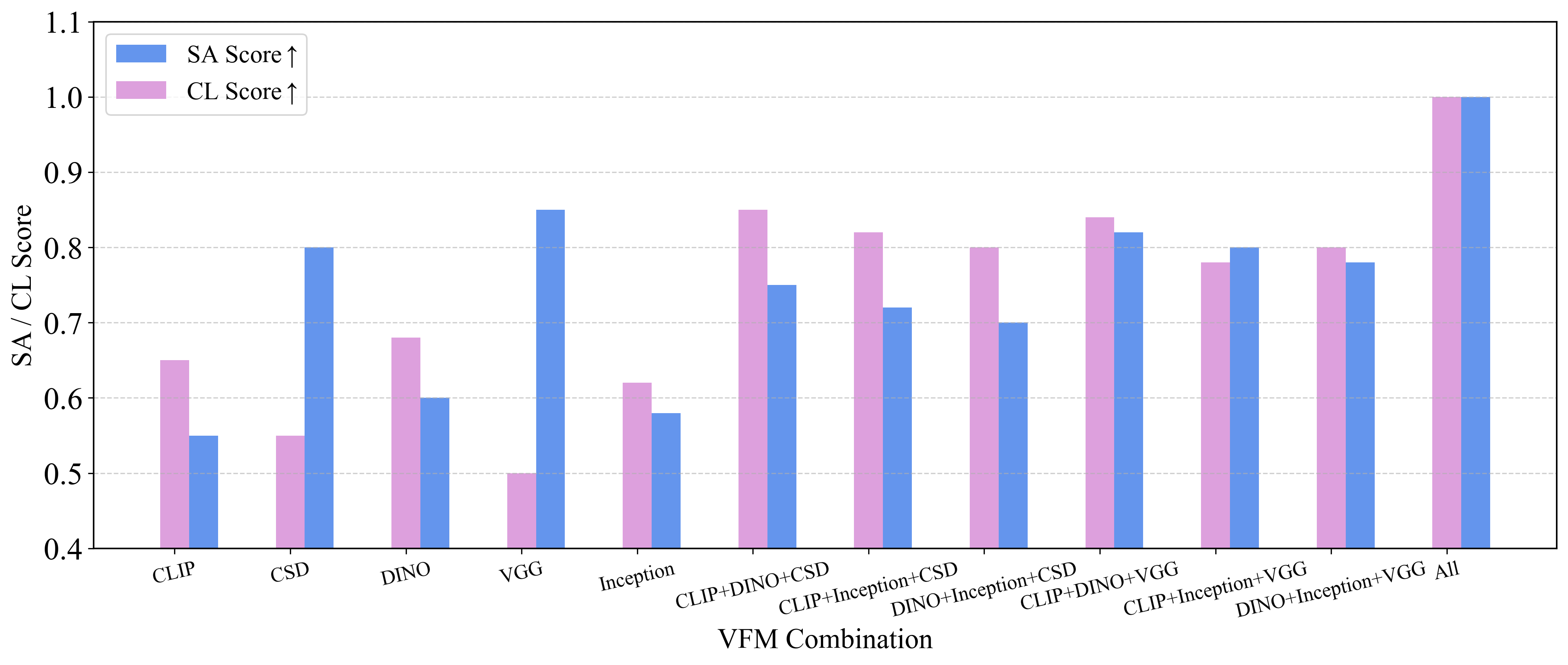}
    \vspace{-1.3em}
    \caption{Ablation of VFM compositions. CLIP, DINO, and Inception excel at suppressing content leakage, while VGG and CSD-CLIP better preserve fine-grained style patterns. Increasing the number of models improves both metrics, with the full ensemble of five achieving the best performance.}
    \label{fig:vfm_abl}
\end{figure}

\begin{figure}
    \centering
    \includegraphics[width=\linewidth]{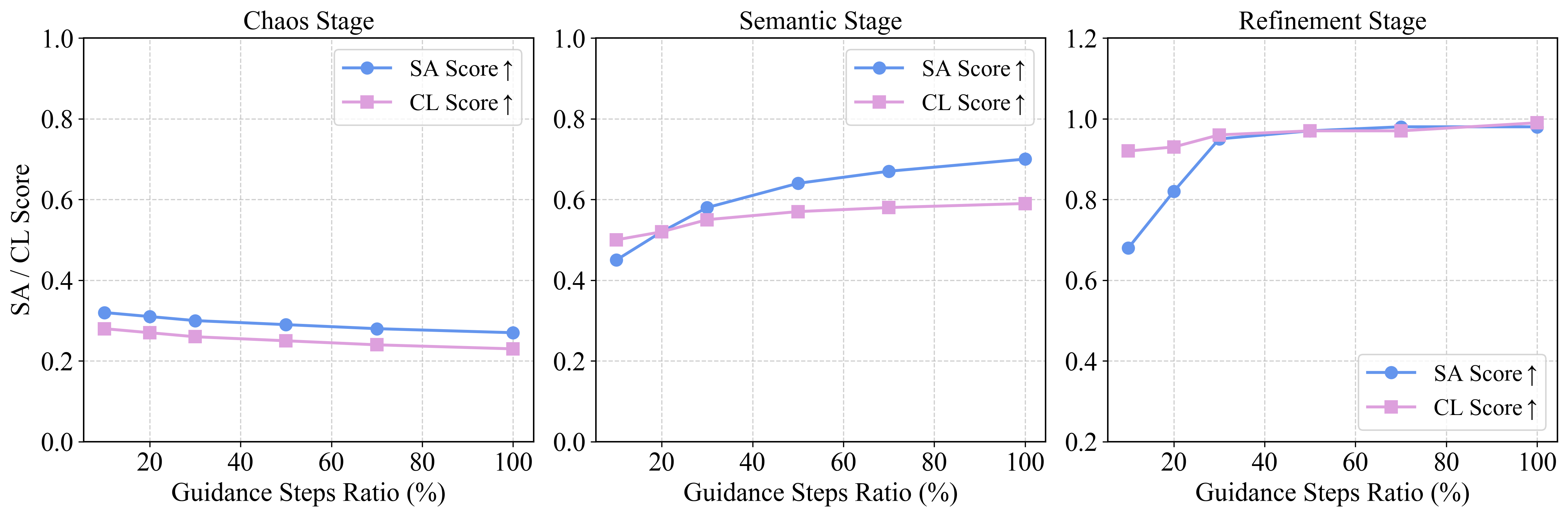}
    \vspace{-1.3em}
    \caption{Effect of guidance stage in Energy-Guided Calibration. We compare applying guidance in the chaotic, semantic, and refinement stages, with the x-axis indicating the guided proportion of each stage. Guidance is ineffective in the chaotic stage, while in the semantic and refinement stages it improves SA with little change in CL. The best efficiency-performance trade-off is achieved by guiding the final 20\% of the refinement stage, equivalent to the final \(T/10\) steps.}
    \label{fig:calib_stage_abl}
    \vspace{-1.3em}
\end{figure}

\vspace{-1.5em}

\paragraph{Calibration Stage} We apply Energy-Guided Calibration only in the final \(T/10\) timesteps by default for efficiency. To study the effect of guidance timing, we divide sampling into three intervals: the chaotic stage \((t:T\to4T/5)\), where the image is mostly noise; the semantic stage \((t:4T/5\to T/2)\), where the global structure emerges; and the refinement stage \((t:T/2\to0)\), where local textures and details are formed. For each interval, we vary the guided timestep ratio from 10\% to 100\%.

As shown in Fig.~\ref{fig:calib_stage_abl}, guidance is ineffective in the chaotic stage. In the semantic and refinement stages, SA improves as more guidance steps are used, while CL remains relatively stable, indicating that calibration mainly enhances style alignment without increasing content leakage. The best efficiency-performance trade-off is achieved by guiding 20\% of the refinement stage, equivalent to the final \(T/10\) timesteps, which we use as the default setting.


\section{Conclusion}

We presented CLeaR, a training-free framework for content-leakage-resistant style transfer. We identified that the leakage-degradation dilemma arises across the entire style transfer process: from separating C-S features, to grounding the style signal in a representation space, to injecting it during diffusion generation. CLeaR addresses these three sources with Orthogonal Subspace Projection for content-reduced style targets, Ensemble Inversion for multi-VFM style anchor extraction, and Energy-Guided Calibration for style-preserving diffusion sampling. Our theoretical analysis supports the benefit of using multiple VFMs, and experiments on StyleBench show improved style alignment and content leakage suppression. These results demonstrate the effectiveness of addressing C-S disentanglement jointly across feature separation, style representation, and generation.

\paragraph{Limitations} Despite its effectiveness, CLeaR has several limitations. First, it introduces additional test-time computation via Ensemble Inversion and Energy-Guided Calibration. While the measured runtime is comparable to several training-free baselines, it is still slower than simpler feed-forward methods. Second, as analyzed in Appendix~\ref{appendix:failure_mode}, ambiguous content descriptions, degenerate anchors, and adapter misinterpretation of inversion noise can still lead to leakage and artifacts. These cases represent an important direction for future work.


{\small
\bibliographystyle{plain}
\bibliography{mybib}
}


\newpage
\appendix

\setcounter{page}{1}
\setcounter{figure}{0}
\setcounter{table}{0}
\renewcommand\thefigure{A\arabic{figure}}
\renewcommand\thetable{A\arabic{table}}
\renewcommand\theHfigure{A\arabic{figure}}
\renewcommand\theHtable{A\arabic{table}}

\section{Overall Pipeline of CLeaR}
\label{appendix:overall_pipe}

We provide detailed pseudocode in Alg.~\ref{alg:overall_pipe} to facilitate a better understanding of our framework. The algorithm consists of three main stages: (1) computing orthogonal style anchors for multiple VFMs, (2) inverting these anchors into a pixel-space style image via ensemble optimization, and (3) generating the final output with energy-guided calibration.

In Stage 1, we generate a content image \(I_c\) from the description \(c\) and compute orthogonal style features \(\mathbf{f}_s^{(k)}\) for each VFM. Stage 2 optimizes a pixel-space image \(I_a\) to match these features simultaneously, producing a style anchor \(I_a^*\). Stage 3 uses the IP-Adapter \(\Phi_{\text{IP}}\) to condition the diffusion model on \(I_a^*\) and the target content \(y\) (or \(I_y\)), while additionally applying an energy correction based on the same VFMs. The energy guidance is only applied during the semantic stage (e.g., intermediate timesteps) to balance efficiency and style fidelity. The final output \(I_{\text{out}}\) is the decoded latent \(\mathbf{z}_0\). Hyperparameters such as \(\lambda_k\), \(\lambda_{\mathcal{R}}\), \(\eta\), \(\rho\) are set empirically.

\begin{algorithm}[H]
\caption{Overall Pipeline of CLeaR}
\label{alg:overall_pipe}
\begin{algorithmic}[1]
\REQUIRE Style image \(I\), content description \(c\), target content prompt \(y\) (or target image \(I_y\)), pre-trained VFMs \(\{F_k\}_{k=1}^K\), text-to-image model \(\Phi\), IP-Adapter \(\Phi_{\text{IP}}\), VAE decoder \(D\), DDIM sampler with \(\boldsymbol{\epsilon}_\theta\)
\ENSURE Output image \(I_{\text{out}}\)

\STATE \textbf{--- Stage 1: orthogonal style anchors ---}
\STATE \(I_c \gets \Phi(c)\) \COMMENT{generate content image}
\FOR{each \(k=1\) to \(K\)}
    \STATE \(\mathbf{f}^{(k)} \gets F_k(I)\), \(\mathbf{f}_c^{(k)} \gets F_k(I_c)\)
    \STATE \(\mathbf{f}_s^{(k)} \gets \mathbf{f}^{(k)} - \mathbf{f}_c^{(k)}\)
    \STATE \(\mathbf{f}_s^{(k)} \gets \mathbf{f}_s^{(k)} - \frac{\langle \mathbf{f}_s^{(k)}, \mathbf{f}_c^{(k)} \rangle}{\|\mathbf{f}_c^{(k)}\|^2} \mathbf{f}_c^{(k)}\) \COMMENT{orthogonal projection}
\ENDFOR

\STATE \textbf{--- Stage 2: ensemble inversion to obtain style anchor image ---}
\STATE Initialize \(I_a \sim \mathcal{N}(0, \sigma^2)\) \COMMENT{random tensor}
\FOR{\(t = 1\) to \(T_{\text{inv}}\)}
    \FOR{each \(k=1\) to \(K\)}
        \STATE \(\mathbf{f}_a^{(k)} \gets F_k(I_a)\)
    \ENDFOR
    \STATE \(\mathcal{L}_{\text{inv}} \gets \sum_{k=1}^K \lambda_k \|\mathbf{f}_a^{(k)} - \mathbf{f}_s^{(k)}\|^2 + \lambda_{\mathcal{R}} \mathcal{R}(I_a)\)
    \STATE \(I_a \gets I_a - \eta \nabla_{I_a} \mathcal{L}_{\text{inv}}\)
\ENDFOR
\STATE \(I_a^* \gets I_a\)

\STATE \textbf{--- Stage 3: energy-guided calibration during generation ---}
\STATE Initialize \(\mathbf{z}_T \sim \mathcal{N}(0,\mathbf{I})\)
\FOR{\(t = T\) down to \(1\)}
    \IF{\(t \in \mathcal{T}_{\mathrm{cal}}\)}
        \STATE \(\boldsymbol{\epsilon}_t \gets 
        \boldsymbol{\epsilon}_\theta(\mathbf{z}_t,t,I_a^*,y)\)

        \STATE \(\hat{\mathbf{z}}_0
        \gets
        (\mathbf{z}_t-\sigma_t\boldsymbol{\epsilon}_t)/\alpha_t\)

        \STATE \(\hat{\mathbf{x}}_0 \gets D(\hat{\mathbf{z}}_0)\)

        \STATE \(\mathcal{E}_t
        \gets
        \sum_{k=1}^{K}
        \lambda_k
        \left\|
        F_k(\hat{\mathbf{x}}_0)
        -
        \mathbf{f}_s^{(k)}
        \right\|^2\)

        \STATE \(\tilde{\mathbf{z}}_t
        \gets
        \mathbf{z}_t-\rho\nabla_{\mathbf{z}_t}\mathcal{E}_t\)

        \STATE \(\mathbf{z}_{t-1}
        \gets
        \mathrm{DDIM}
        \big(
        \tilde{\mathbf{z}}_t,
        \boldsymbol{\epsilon}_\theta(\tilde{\mathbf{z}}_t,t,I_a^*,y),
        t
        \big)\)
    \ELSE
        \STATE \(\mathbf{z}_{t-1}
        \gets
        \mathrm{DDIM}
        \big(
        \mathbf{z}_t,
        \boldsymbol{\epsilon}_\theta(\mathbf{z}_t,t,I_a^*,y),
        t
        \big)\)
    \ENDIF
\ENDFOR
\STATE \(I_{\mathrm{out}} \gets D(\mathbf{z}_0)\)
\RETURN \(I_{\mathrm{out}}\)
\end{algorithmic}
\end{algorithm}

\section{Proof of the Multi-Model Inversion Error Bound}
\label{appendix:proof_multi_model}

We provide a local first-order analysis of ensemble inversion.

\paragraph{Setup} Let
\begin{equation}
\hat x_K
=
\arg\min_x
\left\{
\frac{1}{K}\sum_{k=1}^K \|F_k(x)-s_k\|^2
+
\lambda \|x-x^\star\|^2
\right\},
\label{eq:app_obj}
\end{equation}
where $x^\star \in \mathbb{R}^d$ is the ideal content-free style anchor.
Write
\begin{equation}
\hat\Delta_K := \hat x_K-x^\star.
\label{eq:app_delta_def}
\end{equation}

\begin{assumption}[Local first-order regime]
\label{ass:local_regime}
Each $F_k$ is differentiable in a neighborhood of $x^\star$, and for sufficiently small $\Delta$,
\begin{equation}
F_k(x^\star+\Delta)
=
F_k(x^\star)+J_k\Delta+r_k(\Delta),
\qquad
J_k := \nabla F_k(x^\star),
\label{eq:app_taylor}
\end{equation}
where the remainder $r_k(\Delta)$ is lower-order than the linear term as $\|\Delta\|\to 0$.
\end{assumption}

\begin{assumption}[Ensemble-average bias cancellation]
\label{ass:bias_cancel}
The ensemble-averaged image-space bias vanishes:
\begin{equation}
\frac{1}{K}\sum_{k=1}^K J_k^\top \mu_k = 0,
\qquad
\mu_k := \mathbb{E}[\varepsilon_k].
\label{eq:app_bias_cancel}
\end{equation}
\end{assumption}

\begin{assumption}[Bounded Jacobians]
\label{ass:jacobian_bound}
The local Jacobians are uniformly bounded:
\begin{equation}
\|J_k\|_{\mathrm{op}}^2 \le M_k.
\label{eq:app_Mk}
\end{equation}
\end{assumption}

\begin{proposition}
\label{prop:appendix_main}
Under Assumption~\ref{ass:main_div_ident} and Appendix Assumptions~\ref{ass:local_regime}-\ref{ass:jacobian_bound},
\begin{equation}
\mathbb{E}\|\hat x_K-x^\star\|^2
\lesssim
\frac{d\,\bar M_K\,\sigma^2}{(m_K+\lambda)^2}
\left(
\rho_K+\frac{1-\rho_K}{K}
\right),
\label{eq:app_final}
\end{equation}
where
\begin{equation}
\bar M_K := \frac{1}{K}\sum_{k=1}^K M_k.
\label{eq:app_Mbar}
\end{equation}
\end{proposition}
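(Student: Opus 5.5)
We linearize the first-order optimality condition of \eqref{eq:app_obj} around $x^\star$. Setting the gradient to zero at $\hat x_K$ and using Assumption~\ref{ass:local_regime} gives
\begin{equation*}
\frac{1}{K}\sum_{k=1}^K \nabla F_k(\hat x_K)^\top\bigl(F_k(\hat x_K)-s_k\bigr)+\lambda\hat\Delta_K=0 .
\end{equation*}
We replace $\nabla F_k(\hat x_K)$ by $J_k$ and $F_k(\hat x_K)-s_k$ by $J_k\hat\Delta_K-\varepsilon_k$, discarding remainder terms. These are of lower order because $r_k(\Delta)=o(\|\Delta\|)$ and the Jacobian varies continuously; this is exactly what the symbol $\lesssim$ absorbs. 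The resulting linear system is $(H_K+\lambda I)\hat\Delta_K\approx \frac1K\sum_k J_k^\top\varepsilon_k$. Since \eqref{eq:theory_ident} gives $H_K+\lambda I\succeq (m_K+\lambda)I$, we obtain
\begin{equation*}
\|\hat\Delta_K\|\lesssim \frac{1}{m_K+\lambda}\Bigl\|\frac1K\sum_{k=1}^K J_k^\top\varepsilon_k\Bigr\|.
\end{equation*}

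Next we split $\varepsilon_k=\mu_k+\tilde\varepsilon_k$. By Assumption~\ref{ass:bias_cancel}, the bias part $\frac1K\sum_k J_k^\top\mu_k$ vanishes. What remains is to bound the second moment of $g:=\frac1K\sum_k J_k^\top\tilde\varepsilon_k$. We write
\begin{equation*}
\mathbb{E}\|g\|^2=\frac{1}{K^2}\sum_{k,j}\mathrm{tr}\bigl(J_k^\top\mathbb{E}[\tilde\varepsilon_k\tilde\varepsilon_j^\top]J_j\bigr)
\end{equation*}
and bound each trace by $d\,\|J_k\|_{\mathrm{op}}\|J_j\|_{\mathrm{op}}\|\mathbb{E}[\tilde\varepsilon_k\tilde\varepsilon_j^\top]\|_{\mathrm{op}}$. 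This uses that the matrix is $d\times d$, so $|\mathrm{tr}(A)|\le d\|A\|_{\mathrm{op}}$. It is also where the factor $d$ enters.

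We then apply \eqref{eq:theory_dep}. The diagonal terms contribute at most $\frac{d\sigma^2}{K^2}\sum_k M_k=\frac{d\sigma^2\bar M_K}{K}$. The off-diagonal terms contribute at most
\begin{equation*}
\frac{d\rho_K\sigma^2}{K^2}\sum_{k\ne j}\sqrt{M_kM_j}\le \frac{d\rho_K\sigma^2}{K^2}\Bigl(\sum_k\sqrt{M_k}\Bigr)^2\le d\rho_K\sigma^2\bar M_K,
\end{equation*}
where the last step is Cauchy--Schwarz. A sharper count of the off-diagonal terms yields
\begin{equation*}
\frac{d\rho_K\sigma^2}{K^2}\Bigl[\Bigl(\sum_k\sqrt{M_k}\Bigr)^2-\sum_k M_k\Bigr]\le d\rho_K\sigma^2\bar M_K\Bigl(1-\frac1K\Bigr).
\end{equation*}
Adding the diagonal contribution gives $d\sigma^2\bar M_K\bigl(\rho_K+\frac{1-\rho_K}{K}\bigr)$. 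Dividing by $(m_K+\lambda)^2$ yields \eqref{eq:app_final}.

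The main obstacle is making the linearization step honest. We must ensure that $\hat x_K$ lies in the neighborhood where Assumption~\ref{ass:local_regime} holds, and that the remainder and Jacobian-perturbation terms do not spoil the $(m_K+\lambda)^{-1}$ factor. To handle this, we will assume small noise. Strong convexity of the regularized objective near $x^\star$ (modulus at least $m_K+\lambda$ minus $o(1)$ corrections) then localizes the minimizer. We will treat the result as a leading-order bound, with higher-order terms in $\|\hat\Delta_K\|$ absorbed into $\lesssim$, consistent with the "local first-order analysis" framing.
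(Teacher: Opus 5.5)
Your proposal is correct and follows essentially the same route as the paper. Like the paper, you reduce to the linear system $(H_K+\lambda I)\hat\Delta_K \approx \frac{1}{K}\sum_k J_k^\top\varepsilon_k$, bound the inverse by $(m_K+\lambda)^{-1}$, remove the bias via Assumption~\ref{ass:bias_cancel}, and control the second moment through $|\mathrm{tr}(A)|\le d\|A\|_{\mathrm{op}}$ with a diagonal/off-diagonal split. The differences are cosmetic: you linearize the stationarity condition rather than the objective (which implicitly uses continuity of the Jacobian, slightly beyond Assumption~\ref{ass:local_regime}), and you bound $\sum_{k\neq j}\sqrt{M_kM_j}$ by Cauchy--Schwarz on $(\sum_k\sqrt{M_k})^2$ instead of the paper's pairwise $\sqrt{M_kM_j}\le\frac{1}{2}(M_k+M_j)$, both giving $(K-1)\sum_k M_k$.
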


\begin{proof}
Under Assumption~\ref{ass:local_regime}, the leading behavior is governed by the linearized objective
\begin{equation}
\tilde\Delta_K
=
\arg\min_\Delta
\left\{
\frac{1}{K}\sum_{k=1}^K \|J_k\Delta-\varepsilon_k\|^2
+
\lambda \|\Delta\|^2
\right\}.
\label{eq:app_lin_obj}
\end{equation}
Expanding the quadratic objective gives
\begin{align}
\frac{1}{K}\sum_{k=1}^K \|J_k\Delta-\varepsilon_k\|^2 + \lambda \|\Delta\|^2
&=
\frac{1}{K}\sum_{k=1}^K
\left(
\Delta^\top J_k^\top J_k \Delta
-2 \varepsilon_k^\top J_k \Delta
+\varepsilon_k^\top\varepsilon_k
\right)
+\lambda \Delta^\top\Delta \notag\\
&=
\Delta^\top H_K \Delta
-2 b_K^\top \Delta
+\lambda \Delta^\top\Delta
+\text{constant},
\label{eq:app_expand}
\end{align}
where
\begin{equation}
b_K := \frac{1}{K}\sum_{k=1}^K J_k^\top \varepsilon_k.
\label{eq:app_bk}
\end{equation}
Differentiating Eq.~\ref{eq:app_expand} with respect to $\Delta$ and setting the gradient to zero yields
\begin{equation}
(H_K+\lambda I)\tilde\Delta_K = b_K.
\label{eq:app_normal}
\end{equation}
Hence
\begin{equation}
\tilde\Delta_K = (H_K+\lambda I)^{-1} b_K.
\label{eq:app_solution}
\end{equation}

By Assumption~\ref{ass:main_div_ident},
\begin{equation}
H_K+\lambda I \succeq (m_K+\lambda)I,
\end{equation}
so
\begin{equation}
\|(H_K+\lambda I)^{-1}\|_{\mathrm{op}}
\le
\frac{1}{m_K+\lambda}.
\label{eq:app_inv}
\end{equation}
Therefore,
\begin{equation}
\mathbb{E}\|\tilde\Delta_K\|^2
\le
\frac{1}{(m_K+\lambda)^2}\mathbb{E}\|b_K\|^2.
\label{eq:app_reduce}
\end{equation}

We now decompose $b_K$ into bias and centered fluctuation terms:
\begin{equation}
b_K
=
\frac{1}{K}\sum_{k=1}^K J_k^\top \mu_k
+
\frac{1}{K}\sum_{k=1}^K J_k^\top \tilde\varepsilon_k
=: \beta_K + u_K.
\label{eq:app_decomp}
\end{equation}
By Assumption~\ref{ass:bias_cancel},
\begin{equation}
\beta_K = 0.
\label{eq:app_beta_zero}
\end{equation}
Hence
\begin{equation}
b_K = u_K
\qquad\text{and}\qquad
\mathbb{E}\|b_K\|^2 = \mathbb{E}\|u_K\|^2.
\label{eq:app_reduce_u}
\end{equation}

Now
\begin{equation}
u_K
=
\frac{1}{K}\sum_{k=1}^K J_k^\top \tilde\varepsilon_k,
\end{equation}
so
\begin{equation}
\mathbb{E}\|u_K\|^2
=
\frac{1}{K^2}
\sum_{k=1}^K \sum_{j=1}^K
\mathbb{E}\!\left[
\tilde\varepsilon_k^\top J_k J_j^\top \tilde\varepsilon_j
\right].
\label{eq:app_u_expand}
\end{equation}

For the diagonal terms,
\begin{align}
\mathbb{E}\!\left[
\tilde\varepsilon_k^\top J_k J_k^\top \tilde\varepsilon_k
\right]
&=
\mathrm{tr}\!\left(
J_k J_k^\top \mathbb{E}[\tilde\varepsilon_k\tilde\varepsilon_k^\top]
\right) \notag\\
&\le
d\,\|J_k\|_{\mathrm{op}}^2\,\sigma^2 \notag\\
&\le
d\,M_k\,\sigma^2.
\label{eq:app_diag}
\end{align}

For the off-diagonal terms \(k\neq j\), Assumption~\ref{ass:main_div_ident} and Assumption~\ref{ass:jacobian_bound} give
\begin{align}
\left|
\mathbb{E}\!\left[
\tilde\varepsilon_k^\top J_k J_j^\top \tilde\varepsilon_j
\right]
\right|
&=
\left|
\mathrm{tr}\!\left(
J_k J_j^\top \mathbb{E}[\tilde\varepsilon_j\tilde\varepsilon_k^\top]
\right)
\right| \notag\\
&\le
d\,\|J_k\|_{\mathrm{op}}\|J_j\|_{\mathrm{op}}\,\rho_K \sigma^2 \notag\\
&\le
d\,(M_kM_j)^{1/2}\rho_K \sigma^2 \notag\\
&\le
\frac{d\,\rho_K\,\sigma^2}{2}(M_k+M_j).
\label{eq:app_offdiag}
\end{align}

Substituting Eq.~\ref{eq:app_diag} and Eq.~\ref{eq:app_offdiag} into Eq.~\ref{eq:app_u_expand}, we obtain
\begin{align}
\mathbb{E}\|u_K\|^2
&\le
\frac{1}{K^2}
\left[
\sum_{k=1}^K d\,M_k\,\sigma^2
+
\sum_{k\neq j}\frac{d\,\rho_K\,\sigma^2}{2}(M_k+M_j)
\right] \notag\\
&=
\frac{d\,\sigma^2}{K^2}
\left[
\sum_{k=1}^K M_k
+
\rho_K (K-1)\sum_{k=1}^K M_k
\right] \notag\\
&=
d\,\bar M_K\,\sigma^2
\left(
\frac{1}{K}+\rho_K\frac{K-1}{K}
\right) \notag\\
&=
d\,\bar M_K\,\sigma^2
\left(
\rho_K+\frac{1-\rho_K}{K}
\right).
\label{eq:app_u_bound}
\end{align}

Combining Eq.~\ref{eq:app_reduce}, Eq.~\ref{eq:app_reduce_u}, and Eq.~\ref{eq:app_u_bound} yields
\begin{equation}
\mathbb{E}\|\tilde\Delta_K\|^2
\le
\frac{d\,\bar M_K\,\sigma^2}{(m_K+\lambda)^2}
\left(
\rho_K+\frac{1-\rho_K}{K}
\right).
\label{eq:app_linear_final}
\end{equation}
Under Assumption~\ref{ass:local_regime}, the neglected remainder is lower-order in the local regime, so the same leading-order bound applies to the exact estimator $\hat x_K-x^\star$.
\end{proof}

\begin{figure}[!htbp]
    \centering
    \includegraphics[width=\linewidth]{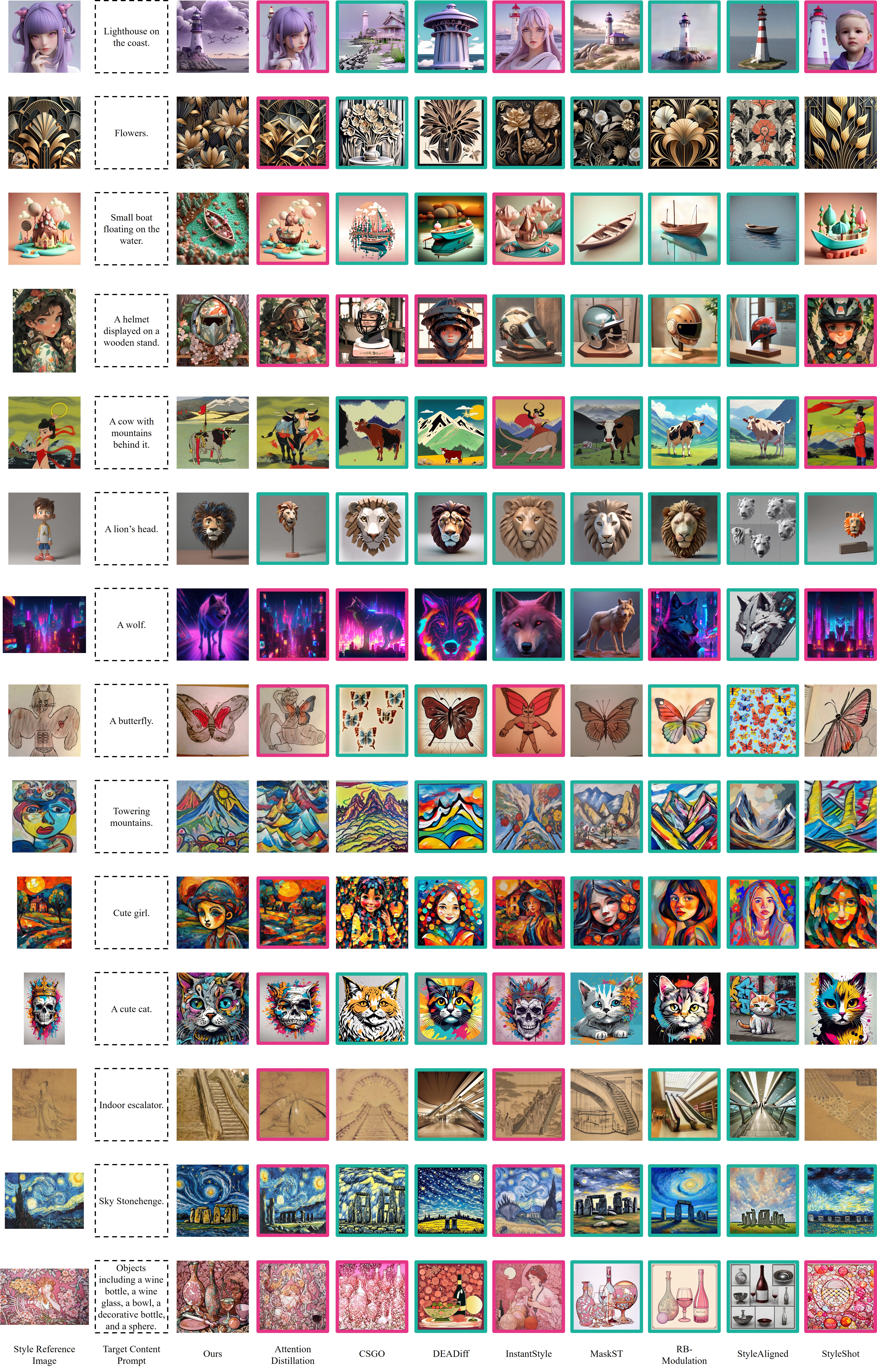}
    \vspace{-1.3em}
    \caption{Extended qualitative comparisons, with boxes highlighting \textcolor{my_magenta} {content leakage} and \textcolor{my_cyan} {style degradation}. Our method maintains balanced stylization across all examples.}
    \label{fig:qual_comp_plus}
\end{figure}

\section{More Qualitative Comparison Results}
\label{appendix:qual_comp}

Fig.~\ref{fig:qual_comp_plus} provides additional qualitative comparisons. Consistent with the observations in Sec.~\ref{sec:comparison}, existing methods still suffer from either content leakage (magenta boxes) or style degradation (cyan boxes) across diverse examples, while our CLeaR achieves clean stylization without compromising either aspect.

\section{More Examples of Style Anchors}
\label{appendix:style_anchor}

Fig.~\ref{fig:style_anchor_plus} presents additional style anchors extracted by our Ensemble Inversion. Across various artistic styles, the anchors consistently suppress content semantics (e.g., objects, faces, text) while preserving style attributes such as color, texture, and brushstrokes.

\begin{figure}[!htbp]
    \centering
    \includegraphics[width=\linewidth]{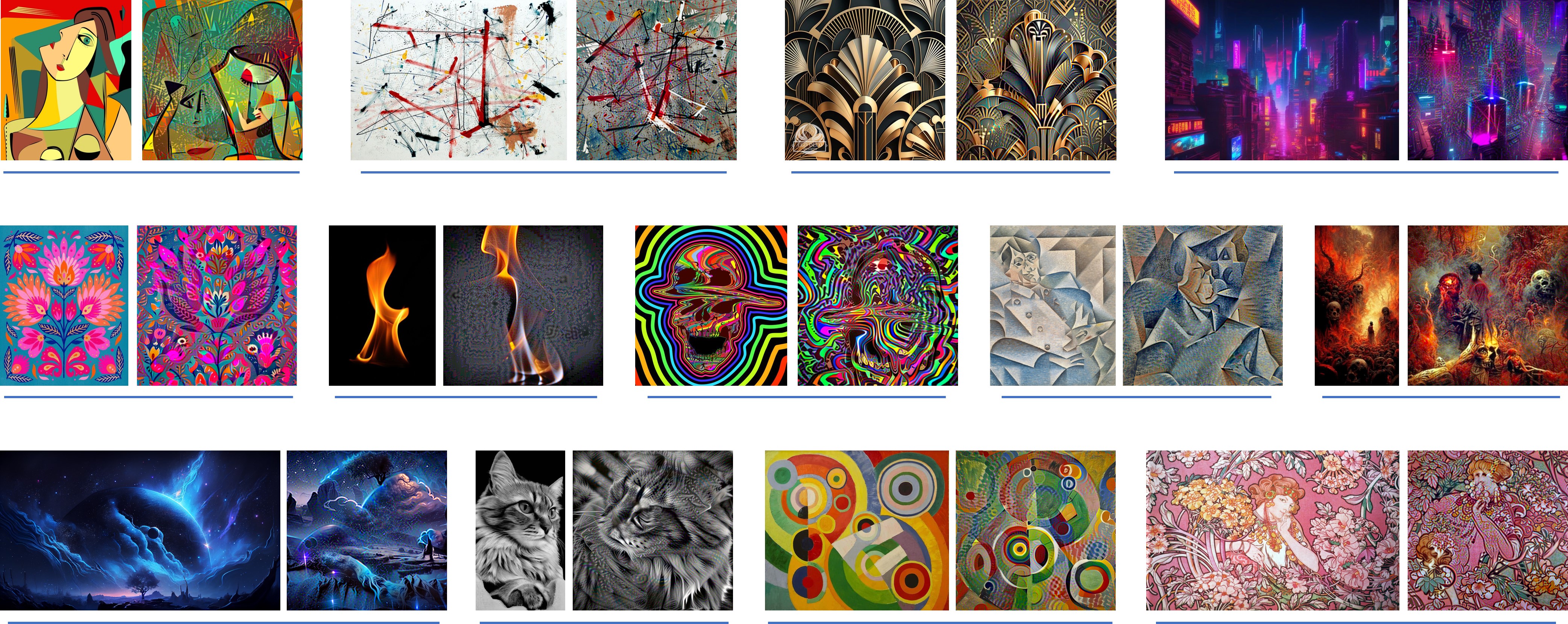}
    \vspace{-1.3em}
    \caption{Additional style anchor examples. Each pair shows the original reference (left) and the extracted anchor (right).}
    \label{fig:style_anchor_plus}
\end{figure}

\section{Comparisons on Held-Out VFMs and AI Evaluator}
\label{appendix:heldout_quant_comp}

\begin{table}[!htbp]
    \fontsize{12pt}{17pt}\selectfont
    \centering
    \caption{Performance of held-out VFMs and independent AI evaluators, with the \textbf{best} and \underline{second-best} scores marked accordingly.}
    \label{tab:heldout_quant_comp}
    \begin{adjustbox}{width=\linewidth}
    \begin{tabular}{ccccccccccc}
        \hline
        \multirow{2}{*}{} & \multicolumn{2}{c}{Style Alignment} & \multicolumn{2}{c}{Content Alignment} & \multicolumn{4}{c}{Content Leakage} & \multicolumn{2}{c}{Aesthetic Quality} \\
        \cmidrule(lr){2-3} \cmidrule(lr){4-5} \cmidrule(lr){6-9} \cmidrule(lr){10-11}
        & SigLIP$\uparrow$ & Nomic$\uparrow$ & SigLIP$\uparrow$ & Nomic$\uparrow$ & SigLIP$\downarrow$ & Nomic$\downarrow$ & Kimi-K2.6$\uparrow$ & GPT-5.5$\uparrow$ & Kimi-K2.6$\uparrow$ & GPT-5.5$\uparrow$ \\
        \hline
        Attention Distillation & 0.669 & 0.842 & 0.0867 & 0.0573 & 0.588 & 0.797 & 2.03 & 1.68 & 3.20 & 3.10 \\
        CSGO & 0.592 & 0.800 & 0.117 & 0.0777 & 0.539 & 0.774 & 3.68 & 3.27 & 3.43 & 3.36 \\
        DEADiff & 0.563 & 0.788 & 0.126 & 0.0812 & 0.552 & 0.761 & 3.71 & 3.24 & 3.62 & 3.58 \\
        InstantStyle & 0.609 & 0.800 & 0.123 & 0.0826 & 0.529 & 0.760 & 2.91 & 2.47 & 3.89 & 3.86 \\
        MaskST & 0.588 & 0.790 & 0.128 & 0.0833 & \textbf{0.521} & 0.753 & 3.56 & 3.08 & 4.03 & 4.01 \\
        RB-Modulation & 0.584 & 0.789 & 0.136 & \underline{0.0887} & 0.557 & 0.761 & 3.92 & 3.53 & \textbf{4.48} & \underline{4.39} \\
        StyleAligned & 0.544 & 0.772 & \textbf{0.137} & \textbf{0.0893} & 0.531 & \underline{0.751} & \underline{4.03} & \underline{3.58} & \underline{4.39} & 4.18 \\
        StyleShot & 0.612 & 0.814 & 0.0997 & 0.0698 & 0.534 & 0.771 & 3.79 & 3.41 & 3.78 & 3.74 \\
        Ours & \textbf{0.754} & \textbf{0.876} & \textbf{0.137} & 0.0803 & \underline{0.527} & \textbf{0.748} & \textbf{4.28} & \textbf{3.92} & 4.20 & \textbf{4.52} \\
        \hline
    \end{tabular}
    \end{adjustbox}
\end{table}

To address the concern that CLeaR optimizes and evaluates on overlapping VFM features (CSD-CLIP, VGG, DINO) and that Qwen3 is used both for content description generation and AI-based evaluation, we conduct additional experiments with held-out VFMs and independent AI evaluators. We add SigLIP \cite{tschannen2025siglip} and Nomic \cite{nussbaum2024nomic} as additional VFMs. Since our original selection (e.g., CLIP, DINO, CSD-CLIP) already covers the most standard models for style transfer and vision-language representation, few suitable alternatives remain. For AI evaluation, we include Kimi-K2.6 \cite{team2025kimi} and GPT-5.5 \cite{openai2026gpt55}. To ensure comprehensive coverage, we evaluate these new metrics across all four aspects.

As shown in Tab.~\ref{tab:heldout_quant_comp}, CLeaR consistently achieves the best or competitive performance across all held-out VFMs and independent AI evaluators. These results confirm that our improvements generalize beyond the models used during optimization and are not biased by the closed-loop evaluation setup.

\section{Ablation on Style Categories}
\label{appendix:style_abl}

To investigate performance variation across different styles, we compute SA and CL scores for all 73 style types in StyleBench.

As seen in Tab.~\ref{tab:style_abl}, styles whose stylistic identity is carried by rendering rather than subject matter (e.g., Impressionism, Watercolor, Line Art, and Primitivism) achieve high scores on both SA and CL, indicating easier C-S disentanglement when stylistic signals are distributed across the entire image. In contrast, styles associated with religious, mythological, or historical subjects (e.g., Classicism, Rococo, and Baroque) exhibit lower CL scores even when SA remains high. Such themes often involve complex figures, narratives, or symbols that are hard to enumerate in \(c\), making C-S disentanglement more challenging during Ensemble Inversion.

\begin{table*}[!htbp]
    \fontsize{12pt}{17pt}\selectfont
    \centering
    \caption{Breakdown across the 73 categories on StyleBench. Rich-texture styles yield high SA and CL, while subject-centric styles show lower CL despite strong SA.}
    \vspace{0.4em}
    \label{tab:style_abl}
    \begin{adjustbox}{width=\linewidth}
    \begin{tabular}{ccccccccc}
        \hline
        Style & SA$\uparrow$ & CL$\uparrow$ & Style & SA$\uparrow$ & CL$\uparrow$ & Style & SA$\uparrow$ & CL$\uparrow$ \\
        \hline
        3D Model & 0.792 & 0.276 & Expressionist & 0.855 & 0.476 & Origami & 0.815 & 0.528 \\
        3D Model 01 & 0.729 & 0.457 & Fantasy Art & 0.975 & 0.453 & Orphism & 0.853 & 0.604 \\
        3D Model 02 & 0.764 & 0.511 & Fauvism & 0.926 & 0.339 & Others & 0.799 & 0.380 \\
        3D Model 03 & 0.703 & 0.431 & Flat Vector & 0.585 & 0.338 & Photographic & 0.741 & 0.311 \\
        3D Model 04 & 0.802 & 0.462 & Folk Art & 0.498 & 0.320 & Pixel Art & 0.674 & 0.420 \\
        3D Model 05 & 0.726 & 0.433 & Gongbi & 0.908 & 0.473 & Pointilism & 0.855 & 0.139 \\
        Abstract & 0.741 & 0.496 & Graffiti & 0.640 & 0.279 & Pop Art & 0.671 & 0.479 \\
        Abstract 01 & 0.845 & 0.498 & Hyperrealism & 0.808 & 0.299 & Post-Impressionism & 0.972 & 0.466 \\
        Analog Film & 0.635 & 0.382 & Icon & 0.476 & 0.605 & Precisionism & 0.893 & 0.491 \\
        Anime & 0.837 & 0.394 & Icon 01 & 0.517 & 0.551 & Primitivism & 0.883 & 0.661 \\
        Anime 01 & 0.543 & 0.459 & Icon 02 & 0.551 & 0.432 & Psychedelic & 0.434 & 0.402 \\
        Anime 02 & 0.730 & 0.422 & Impressionism & 0.967 & 0.921 & Realism & 0.924 & 0.547 \\
        Anime 03 & 0.866 & 0.546 & Ink and Wash Painting & 0.864 & 0.246 & Rococo & 0.868 & 0.186 \\
        Anime 04 & 0.823 & 0.458 & IsoMetric & 0.708 & 0.442 & Smoke\&Light & 0.820 & 0.569 \\
        Anime 05 & 0.907 & 0.358 & Japonism & 0.743 & 0.217 & Statue & 0.685 & 0.497 \\
        Anime 06 & 0.878 & 0.631 & Line Art & 0.803 & 0.675 & Steampunk & 0.665 & 0.264 \\
        Anime 07 & 0.817 & 0.389 & Low Poly & 0.780 & 0.546 & Stick Figure & 0.697 & 0.498 \\
        Art Deco & 0.605 & 0.278 & Luminism & 0.992 & 0.529 & Stickers & 0.580 & 0.388 \\
        Baroque & 0.901 & 0.266 & Macabre & 0.937 & 0.584 & Surrealist & 0.890 & 0.507 \\
        Children's Painting & 0.755 & 0.554 & MineCraft & 0.864 & 0.459 & Symbolism & 0.892 & 0.441 \\
        Classicsm & 0.882 & 0.0184 & Monochrome & 0.774 & 0.467 & Tonalism & 0.887 & 0.492 \\
        Constructivism & 0.710 & 0.506 & Neo-Figurative Art & 0.922 & 0.541 & Typography & 0.200 & 0.440 \\
        Craft Clay & 0.475 & 0.508 & Neoclassicism & 0.813 & 0.286 & Watercolor & 0.858 & 0.667 \\
        Cublism & 0.930 & 0.369 & Nouveau & 0.920 & 0.395 & & & \\
        Cyberpunk & 0.923 & 0.499 & Op Art & 0.672 & 0.538 & & & \\
        \hline
    \end{tabular}
    \end{adjustbox}
\end{table*}

\section{Analysis on Failure Modes}
\label{appendix:failure_mode}

We analyze three representative failure modes of CLeaR. All stem from limitations in current pretrained modules rather than the disentanglement formulation itself.

\paragraph{Incomplete Content Description} In theory, an ideal \(c\) allows OSP to remove all content-related components. However, as discussed in Appendix~\ref{appendix:style_abl}, certain styles are difficult to describe clearly, leading to residual content leakage.

\paragraph{Degenerate Style Anchor} Fig.~\ref{fig:style_anchor_bad} shows cases where Ensemble Inversion produces a degenerate anchor that either contains almost no information (col 2) or replicates the style reference (cols 1, 3, 4). This failure mode is frequently observed in styles such as Line Art, Stick Figure, and Icon. Although orthogonal projection is designed to remove content from the style target, current VFM embeddings do not always provide a direction that separates style from content. When style is tightly coupled with a specific object, the inversion objective is satisfied either by an anchor without meaningful structure or by one that copies the reference.

\begin{figure}[!htbp]
    \centering
    \includegraphics[width=1\linewidth]{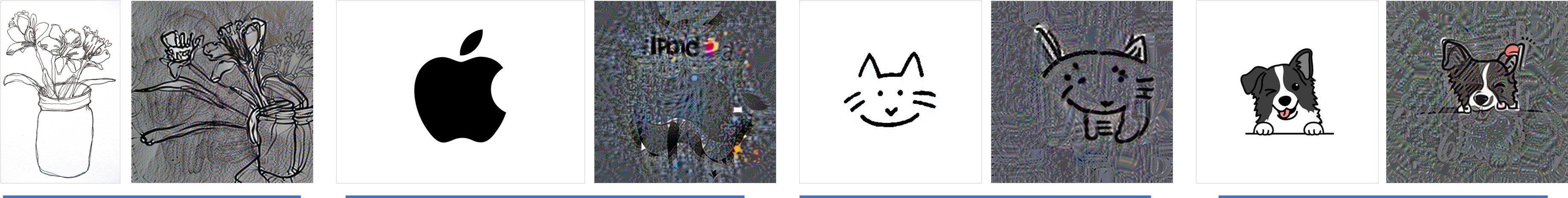}
    \vspace{-1.3em}
    \caption{Degenerate style anchors. Ensemble Inversion may produce an anchor with no structure or one that copies the reference, especially when content and style are tightly coupled.}
    \label{fig:style_anchor_bad}
\end{figure}

\paragraph{Generation Artifacts} Fig.~\ref{fig:qual_bad} shows another failure mode that arises when the style anchor \(I_a^*\) is used as the image condition for diffusion generation, where the output contains noisy lines and blobs. As discussed in Sec.~\ref{sec:style_calib}, the inverted anchor lies outside the adapter's training distribution, so the adapter may misinterpret noise in the anchor as semantic content, forming artifacts in the final result.

\begin{figure}[!htbp]
    \centering
    \includegraphics[width=0.8\linewidth]{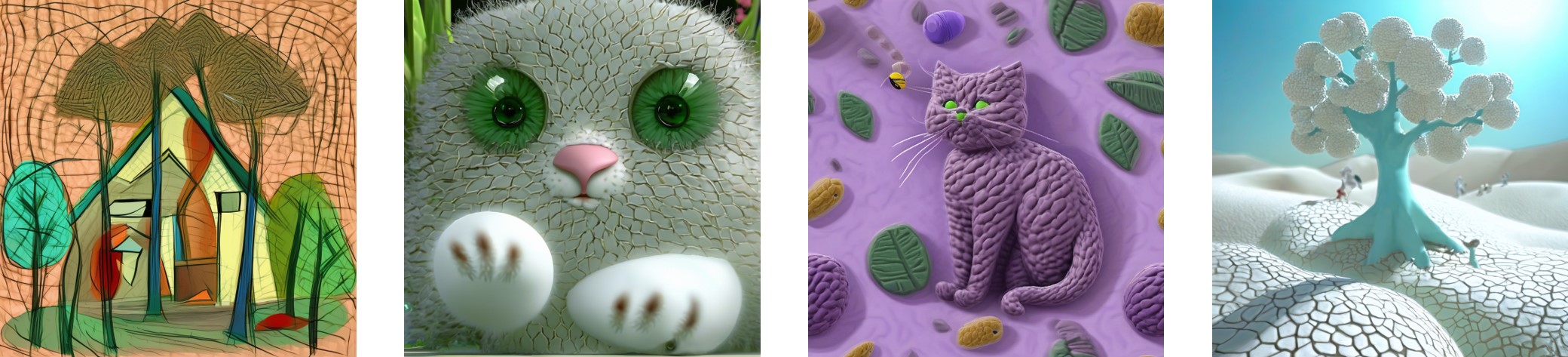}
    \vspace{-0.2em}
    \caption{Artifacts in the generation stage. The adapter may misinterpret inversion noise in style anchor as semantic content, producing noisy lines and blobs.}
    \label{fig:qual_bad}
\end{figure}


\end{document}